\begin{document}

\title{\Large Density of States Graph Kernels}

\author{Leo Huang\thanks{Cornell University, Ithaca, NY, 14853, USA} \\ {\texttt{ah839@cornell.edu}} \and
Andrew J. Graven$^*$\\ {\texttt{andrew@graven.com}} \and
David Bindel$^*$ \\ { \texttt{bindel@cornell.edu}}
}

\date{}

\maketitle


\fancyfoot[R]{\scriptsize{Copyright \textcopyright\ 2021 by SIAM\\
Unauthorized reproduction of this article is prohibited}}





\begin{abstract} \small\baselineskip=9pt 
A fundamental problem on graph-structured data is that of quantifying similarity between graphs. Graph kernels are an established technique for such tasks; in particular, those based on random walks and return probabilities have proven to be effective in wide-ranging applications, from bioinformatics to social networks to computer vision. However, random walk kernels generally suffer from slowness and \textit{tottering}, an effect which causes walks to overemphasize local graph topology, undercutting the importance of global structure. To correct for these issues, we recast return probability graph kernels under the more general framework of \textit{density of states} --- a framework which uses the lens of spectral analysis to uncover graph motifs and properties hidden within the interior of the spectrum --- and use our interpretation to construct scalable, composite density of states based graph kernels which balance local and global information, leading to higher classification accuracies on benchmark datasets.
\end{abstract}
\section{Introduction}
\subsection{Classifying Large Graphs}
Large-scale graph classification tasks abound in a computer vision, social science, and other fields. Communities within social networks can be represented as graphs, with individuals as nodes and connections between them as edges. In the Facebook and Twitter world graphs, communities easily number in the tens of thousands on average, and pages attract millions of followers \cite{nbcqanon}. Large-scale graph classification algorithms can be used to tell apart communities with different characteristics, for example, and therefore has implications for maintaining community integrity and internet safety. Collaboration networks of researchers can have hundreds of thousands of nodes and millions of edges \cite{collabsize} --- a question of interest is whether such sub-networks can be classified according to its research subfield solely based on network structural properties. Fine-grained and complex grid networks arise in texture classification, a subfield of computer vision \cite{PropKer}. Indeed, large graphs are prevalent in the real world. The scalability of algorithms on such graphs is indubitably one of the most pressing challenges in data science \cite{realgraphs}.

There are two distinct, yet interrelated lines of work which address the problem of classification at the graph level. They are, in order of conception, \textit{graph kernels} and \textit{graph neural networks} (GNNs) \cite{GNNsurvey}. Graph kernels are kernel functions which are used to compute inner products of graphs by exploiting structure or edge or vertex label information. As such, they enable kernel learning algorithms, such as support vector machines (SVMs) to operate directly on graphs. The literature on graph kernels is expansive: examples of classical graph kernels include the vertex/edge label, random walk \cite{fastrandom}, spectral decomposition, Weisfeiler-Lehman kernel \cite{wl}, and kernels of the optimal assignment (OA) variety \cite{wloa}. While earlier graph kernels were designed for plain unattributed graphs, later ones have been adapted to graphs with continuous and categorical edge and vertex attributes. Recent works have also incorporated Wasserstein/EMD distances to compute similarity between latent space embeddings (\cite{wassleh}, \cite{fewshot}), including \cite{aaai}, which proposes algorithms for computing an EMD and pyramid matching based kernel to compare global properties of graphs. More recently, \cite{gromov} used a Gromov-Wasserstein learning framework for the similar problem of graph matching. For a survey and comparison of graph kernels, we refer to \cite{graphkernels}.  

Graph neural networks (GNNs) use multi-layer architecture and non-linear activation functions to extract high-order features from graphs structured data. GNNs generally have runtime complexity proportional to $|E|$, but are generally expensive to train. Examples include the graph isomorphism network (GIN) \cite{gin}, deep graph kernels \cite{dgk}, $\text{PATCHY-SAN}$, and Graph Neural Tangent Kernels \cite{gntk}. Recent studies have shown that graph kernels and graph neural networks remain competitive with one another \cite{tudata}.
 
Efforts have been made to collectivize graph kernel benchmark data sets and standardize evaluation procedures. Two such databases are \text{TUDatasets} and Open Graph Benchmark (OGB) (\cite{ogb}, \cite{KKMMN2016}, \cite{tudata}). The OGB and TUData projects both aim to assemble graph datasets from wide ranging domains. They provide tools for loading data, forming train-test splits, and facilitating benchmarking in Python and PyTorch.  

\subsection{Density of States}
The \textit{density of states} (DOS) induced by a real symmetric matrix $A \in \mathbb{R}^{N\times N}$ with eigenpairs $(\lambda_i,q_i), i\in \{1,2,\hdots,N\}$ is defined as the generalized function
\label{eqn:dos}
\begin{equation}\mu(\lambda) = \frac{1}{N} \sum \delta(\lambda - \lambda_i)\end{equation}
$\mu$ is also referred to as the spectral density of $A$. For any vector $u\in\mathbb{R}^n$, the local density of states is given by
\begin{equation}
\mu(\lambda, u) = \sum_{i=1}^N |u^Tq_i|^2\delta(\lambda-\lambda_i)
\end{equation}
The point-wise density of states (PDOS) is obtained by setting $u = e_k$. In \cite{NDS}, DOS was applied in analyzing real world networks with up to several billion nodes. Recent developments have enabled the computation of complete eigenspectra, which can be taken as detailed spectral signatures for graphs. The difficulty remains, however, to construct a suitable graph kernel using these spectral signatures and conduct comparisons on the basis of these features at the graph level. 

\subsection{Related Work}

The current state-of-the-art graph kernels based on return probabilities include $\text{RetGK}_{\text{I}}$ \cite{retp} and its adaptation to labeled graphs, \textit{scalable attributed graph embeddings}, $\text{SAGE}$ \cite{sagegk}. Both employ a return probability feature vector (RPF), defined as
\begin{equation}
    \vec{p}_i = [\textbf{P}_G^1(i, i), \textbf{P}_G^2(i, i), ..., \textbf{P}_G^S(i, i)]^T
\end{equation}
where $\textbf{P}_G^s(i, i), s=1, 2, ..., S$ is the probability that an $s$-step random walk on $G$ starting from $v_i$ returns to itself. The RPF can be computed using an eigendecompositional approach, and enjoys several desirable properties, such as isomorphism-invariance, multi-resolution, and informativeness. Graphs with different RPFs are not isomorphic. A similarity in RPFs generally indicates similarity in spectral information and local node structural roles. 

Both $\text{RetGK}_{\text{I}}$ and SAGE scale linearly with the number of graphs in the dataset, and with the size of the graph embedding. However, they do not scale with $|E|$, the number of edges in the graph. Other well-known graph kernels such as WL do scale with $|E|$; however, their discriminatory power does not surpass that of $\text{RetGK}_{\text{I}}$. 

Although $\cite{retp}$ proposes a fast Monte-Carlo simulation approach to computing the RPF, $\text{RetGK}_{\text{MC}}$, this approach has poorer accuracy compared to $\text{RetGK}_{\text{I}}$ (full eigendecomposition) and $\text{RetGK}_{\text{II}}$ (full eigendecomposition with approximate feature maps). On the other hand,
full eigendecomposition has cost $O(N^3)$, and becomes infeasible for graphs with tens of thousands of nodes. 

A graph kernel which uses spectral information is the $\textit{family of graph spectral distances}$ ($\text{FGSD}$) \cite{fgsd}. This paper introduces a graph representation based on the multiset of node pairwise spectral distances. For each graph, they define the $\textit{graph spectrum}$ $\mathcal{R}:=\{\mathcal{S}_f(x, y)|\forall (x, y)\in V\}$, where $S_f(x, y) = \sum_{k=0}^{N-1}f(\lambda_k)(\phi_k(x)-\phi_k(y))^2$, where $\phi_k(x)$ is the $x$-entry value of the eigenvector $\phi_k$ of the Laplacian of $G$ and $f$ is a monotonic function. For classification, they use a discretization $\mathcal{F}$ of $\mathcal{R}$, namely the histogram of $\mathcal{R}$ with binwidth size chosen from $\{1\mathrm{e}{-3}, 1\mathrm{e}{-4}, 1\mathrm{e}{-5}\}$, and also select the family of biharmonic distances for $f$. Importantly, they demonstrate desirable uniqueness properties of the graph spectrum and exhibit a fast approximation algorithm for computing $\mathcal{F}$ based on Chebyshev series expansion of $f$. Similar to our methods, $\text{FGSD}$ is a graph kernel which utilizes spectral information, and is used for unlabeled datasets. Although the graph feature is well-motivated by uniqueness considerations, the features employed lack interpretability, and do not distinguish between local and global information.

Additional works that compute graph features using spectral information include NetLSD \cite{netlsd} and its approximation SLAQ \cite{slaq}. NetLSD computes a graph feature by using the heat trace signature of a graph. Since the heat kernel is a continuous analogue to the random walk, the trace is analogous to return probabilities. NetLSD is therefore conceptually similar to the techniques considered in this paper. The method is linear in $|E|$ and scales to millions of nodes. The main advantage of this work is that it captures multi-scale information about graphs. Though the authors apply NetLSD to graph classification, they use 1-nearest neighbor (1-NN) classification instead of the standard SVM approach in graph kernels literature. 


\subsubsection{Tottering}

Both RetGK $\cite{retp}$ and SAGE $\cite{sagegk}$ face an issue known as \textit{tottering} \cite{totter}: in a random walk, the local edges of a graph near the starting point are traversed back and forth over and over again, therefore over-representing local features as opposed to global features discovered by longer walks --- this is especially true when the length of the walk is short in comparison to the diameter or size of the graph and poses a bigger problem in large graphs. For random return walks, as opposed to generic random walks, the issue of tottering is further amplified, because the walk must begin and end in the same region of the graph. Although return walks effectively capture the local structural role of a node, they discount global information.

\subsection{Contributions}

\begin{itemize}
     \item We propose a novel global density of states (DOS) graph kernel for comparing global return walk probabilities using Chebyshev moments. We take advantage of Jackson damping to reduce Gibbs phenomena and motif filtering to hasten convergence of the Chebyshev moments when applying the KPM algorithm in computing feature vectors.

    \item We propose a unified density-of-states graph classification framework for (node-wise and global) return-probability-based graph kernels to ameliorate the issue of $\textit{tottering}$ in large graphs by combining local and global density of states (DOS+LDOS). This combination is natural because of the shared underlying spectral information. Our algorithms scale like $O(|E|)$, an improvement over $\text{RetGK}_{\text{I}}$ \cite{retp}, which scales like $O(N^3)$. Moreover, we show that $\text{LDOS}$ and $\text{RetGK}_{\text{I}}$ \cite{retp} are the same up to a change of basis.
   
\end{itemize}

\section{Background}
\subsection{Graph Conventions}
We consider undirected weighted graphs $G = (V, E)$ with vertex set $V = \{v_1, ..., v_n\}$ and edge set $E \subseteq V\times V$. We take the entry $(i, j)$ of the weighted adjacency matrix $A\in \mathbb{R}^{n\times n}$ to be $0$ if $(v_i, v_j)\notin E$ and $w_{ij}$, the weight of edge $(v_i, v_j)$, if the edge belongs to $E$.

\subsection{Normalized Adjacency Matrix} To guarantee that the spectral density function has bounded support within $[-1, 1]$, we consider the normalized adjacency matrix $\tilde{A} = D^{-1/2}AD^{-1/2}$, whose eigenvalues satisfy $-1\le \lambda_1\le ...\le \lambda_n \le 1$. This matrix encodes useful eigenvalue information and shares the same eigenvalues as the random walk matrix $P = D^{-1}A$. Both play a fundamental role in spectral graph algorithms. 

\subsection{Graph Kernels} Kernel methods have long been used for regression and classification tasks. The most frequently used kernel method is the support vector machine (SVM). A valid kernel $k(\bm{x}, \bm{x})$ which can be identified with a reproducing kernel Hilbert Space (RKHS) satisfies the properties of symmetry and positive definiteness. A graph kernel $K(G, H)$ is a kernel defined between graphs $G, H$, which, roughly speaking, measures their degree of similarity. A bevy of graph kernels have been proposed, including random walk kernels and numerous variations thereof (\cite{graphkernels}, \cite{totter}).

\subsection{Kernel Polynomial Method (KPM)}

To compute graph kernels based on the local and global spectral density of a matrix, we must first compute their feature vectors. The literature contains many approaches for estimating the spectral density of a large symmetric matrix. One approach hinges on running the Lanczos algorithm, to obtain approximate Ritz values and Ritz vectors and an approximate decomposition of $A$ of the form $AV_M = V_MT_M+f_Me_M^T$.  Another uses the kernel polynomial method, which we elaborate on below. For a complete review of these approaches and others, we refer to $\cite{siamreview}$.

The KPM method expands the spectral density of a matrix $A$ in terms of the dual basis of a family of orthogonal polynomials satisfying a three-term recurrence \cite{siamreview}. A commonly used family is the Chebyshev polynomials, which are given by the recurrence $T_0(x)=1, T_1(x)=x, T_{m+1}(x) = 2xT_m(x)-T_{m-1}(x)$. Since the Chebyshev polynomials are orthogonal with respect to $w(x) = 2/[(1+\delta_{0n})\pi\sqrt{1-x^2}]$, the dual basis is given by $T^* = w(x)T(x)$, and
\begin{equation}
    \mu(\lambda) = \sum_{m=1}^\infty d_mT_m^*(x)\,dx
\end{equation}
The $m$th Chebyshev moment $d_m$ is given by
\begin{equation}
d_m = \int_{-1}^1 T_m(\lambda)\mu(\lambda)\,d\lambda = \text{Tr}(T_m(A))
\end{equation}
Likewise, the local Chebyshev moments are
\label{eqn:truncate}
\begin{equation}
    d_{mk} = \int_{-1}^1 T_m(\lambda) \mu_k(\lambda)\,d\lambda = T_m(A)_{kk}
\end{equation}
Hutchinson and Bekas proposed a stochastic estimator for the diagonal of a symmetric matrix $A\in\mathbb{R}^{n\times n}$ via taking a Hadamard product:
\[\mathbb{E}[z\circ A z] = \text{diag}(A)\]
An unbiased estimator for the diagonal is given by
\[\text{diag}(A) \approx  \frac{1}{N_z} \sum_{j=1}^{N_z} Z_j \circ A Z_j\]
where $Z_j$ are independent probe vectors. The quantities $T_m(A)Z_j$ can be computed using the three-term recurrence for Chebyshev polynomials. Therefore each moment can be computed in $O(|E|N_z)$ time. 

\subsection{Jackson Damping and Motif Filtering}
A technique developed in $\cite{NDS}, \cite{siamreview}$ to accelerate the computation of the Chebyshev moments for the KPM method is motif filtering. Motif filtering preempts slow convergence caused by spikes in the spectrum of the spectral density of states by filtering them out using a hashing process, thereby reducing the problem to that of approximating smooth portions of the spectrum. We employ both techniques in our graph feature computations.

Jackson damping factors can be used to control the Gibbs oscillation caused by truncating the number of Chebyshev moments in equation \ref{eqn:truncate}. The Jackson factors are given by
\[ g_M = \dfrac{(M+1-k)\cos(k\alpha_M)}{(M+2)}+\dfrac{\sin((k+1)\alpha_M)}{(M+2)\sin (\alpha_M)}\]
where $\alpha_M = \frac{\pi}{M+2}$ and $M$ is the number of moments.



\section{Methods}
\subsection{DOS Graph Kernel}
Common graph motifs, or recurring substructures, are reflected in the spectrum of the graph Laplacian or adjacency matrix in the form of repeated eigenvalues \cite{NDS}. This motivates a graph kernel which compares frequently occuring graph motifs between graphs. We do this indirectly by comparing the feature vector of Chebyshev moments $d_m$ of the normalized adjacency matrix $\tilde{A}$. The explicit DOS Euclidean embedding vector is given by
\begin{equation}
    \mu_{DOS}(G) = [\textbf{R}_G^1, \textbf{R}_G^2,\hdots, \textbf{R}_G^N]^T
\end{equation}
where  $\textbf{R}_G^S := \int_{-1}^1 T_S(\lambda) \,d\mu(\lambda)$ and $\mu(\lambda)$ is the generalized function as defined in equation $\ref{eqn:dos}$ and $T_S$ is the $S$th Chebyshev polynomial. Accordingly, we define the DOS graph kernel between graphs $G$ and $H$ as
\begin{equation}
K(G, H) = \exp(-\gamma\|\mu_{DOS}(G)-\mu_{DOS}(H)\|_2^p)
\end{equation}
The choices of $p=1$ and $p=2$ correspond to the Laplacian and RBF kernels, respectively. 

For the special case of the normalized adjacency matrix, the $m$th moment $d_m$ has the additional interpretation as a random walk return probability, namely the probability that a random walk starting at a node chosen uniformly at random returns to itself after $m$ steps along the edges of the graph.

\subsection{LDOS Graph Kernel}
We begin by recollecting a fact that relates powers of the normalized adjacency matrix $\tilde{A}$ to return probabilities on graphs. Then we use the density of states framework to compute the probabilities.
\begin{theorem}[Return Probabilities]
If $\tilde{A}$ is the normalized adjacency matrix of graph $G$, then the probability that a random walk starting from node $i$ returns to node $i$ after $k$ steps is given by $[\tilde{A}^k]_{ii}$.
\end{theorem}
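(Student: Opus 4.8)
The plan is to reduce the statement to a similarity transformation between $\tilde{A}$ and the random walk matrix $P = D^{-1}A$, combined with the elementary observation that conjugation by a diagonal matrix leaves diagonal entries unchanged.

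First I would set up the probabilistic side. Writing $d_i = \sum_j w_{ij}$ for the (weighted) degree of node $i$, the random walk on $G$ steps from $i$ to $j$ with probability $w_{ij}/d_i$, so the transition matrix is exactly $P = D^{-1}A$, a (row-)stochastic matrix. A routine induction on $k$ via the law of total probability (Chapman--Kolmogorov) then shows that $[P^k]_{ij}$ is the probability that a walk started at $i$ sits at $j$ after $k$ steps; specializing to $j=i$ identifies $[P^k]_{ii}$ as the $k$-step return probability. This induction is the only genuine argument in the proof, and it is standard.

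Next I would relate $P$ to $\tilde A$. Assuming $G$ has no isolated vertices, $D^{1/2}$ and $D^{-1/2}$ are well-defined invertible diagonal matrices, and
\[
\tilde{A} = D^{-1/2} A D^{-1/2} = D^{1/2}\bigl(D^{-1}A\bigr)D^{-1/2} = D^{1/2} P D^{-1/2},
\]
so $\tilde A$ and $P$ are similar. Raising to the $k$th power, the interior factors $D^{-1/2}D^{1/2}$ telescope, giving $\tilde{A}^k = D^{1/2} P^k D^{-1/2}$.

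Finally, reading off the $(i,i)$ entry, $[\tilde{A}^k]_{ii} = d_i^{1/2}\,[P^k]_{ii}\,d_i^{-1/2} = [P^k]_{ii}$, since a diagonal conjugation multiplies the $(i,i)$ entry of any matrix by $d_i^{1/2}d_i^{-1/2}=1$. Chaining this with the first step yields the claim. I do not expect any real obstacle here; the only points worth stating carefully are the (harmless) no-isolated-vertex assumption needed to form $D^{\pm 1/2}$, and the conceptual remark that it is precisely diagonal-similarity invariance of the diagonal that makes the symmetric normalization $D^{-1/2}AD^{-1/2}$ interchangeable with $D^{-1}A$ for the purpose of return probabilities.
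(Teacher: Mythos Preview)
Your proposal is correct and fleshes out what the paper leaves as a one-line remark: the paper simply states that ``this fact is easily proved by an inductive argument'' and gives no further detail. Your induction on $k$ for $[P^k]_{ij}$ together with the diagonal-similarity observation $\tilde{A}^k = D^{1/2}P^kD^{-1/2}$ is exactly the natural way to make that inductive argument precise, so there is no meaningful divergence from the paper's approach.
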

This fact is easily proved by an inductive argument. We demonstrate how the LDOS encodes exactly this set of return probabilities; all that is needed to recover this information is a change-of-basis from the Chebyshev basis to the monomial basis.

By definition the pointwise density of states (PDOS) $\mu_j(\lambda)$ is defined as 
\[\mu(\lambda, e_j):= \sum_{i=1}^n |e_j^Tq_i|^2\delta(\lambda-\lambda_i)\]
where $q_i$ is the $i$th eigenvector of $\tilde{A}$ corresponding to $\lambda_i$. 
\begin{proposition}[Return Probabilities V2]
The probablility that a random walk starting at node $v_i$ returns after $k$ steps is $\int_{-1}^1 \lambda^k \,d\mu_i(\lambda)$, where $\mu_i(\lambda):=\mu(\lambda, e_i)$.
\label{prop:return}
\end{proposition}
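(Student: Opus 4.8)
The plan is to chain the combinatorial fact recorded in Theorem 1 together with the spectral decomposition of the normalized adjacency matrix, and then simply read off the definition of the local density of states. First, by Theorem 1 the quantity in question equals the diagonal entry $[\tilde{A}^k]_{ii}$. Since $\tilde{A}$ is real symmetric, it admits an orthonormal eigendecomposition $\tilde{A} = \sum_{j=1}^n \lambda_j q_j q_j^T$, and because the $q_j$ are orthonormal, raising to the $k$th power gives $\tilde{A}^k = \sum_{j=1}^n \lambda_j^k q_j q_j^T$. Extracting the $(i,i)$ entry yields
\[
[\tilde{A}^k]_{ii} = \sum_{j=1}^n \lambda_j^k \,(e_i^T q_j)^2 = \sum_{j=1}^n \lambda_j^k \,|e_i^T q_j|^2 .
\]

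Second, I would compare this expression with the right-hand side of the claimed identity. By the definition of the pointwise density of states, $\mu_i(\lambda) = \mu(\lambda, e_i) = \sum_{j=1}^n |e_i^T q_j|^2 \,\delta(\lambda-\lambda_j)$, so integrating the monomial $\lambda^k$ against this measure and applying the sifting property of the Dirac delta gives
\[
\int_{-1}^1 \lambda^k \, d\mu_i(\lambda) = \sum_{j=1}^n |e_i^T q_j|^2 \int_{-1}^1 \lambda^k \,\delta(\lambda-\lambda_j)\, d\lambda = \sum_{j=1}^n \lambda_j^k \,|e_i^T q_j|^2 .
\]
Here we use that every eigenvalue $\lambda_j \in [-1,1]$ since $\tilde{A}$ is the normalized adjacency matrix, so no mass of $\mu_i$ escapes the interval of integration. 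Comparing the two displays gives $[\tilde{A}^k]_{ii} = \int_{-1}^1 \lambda^k \, d\mu_i(\lambda)$, which, combined with Theorem 1, is exactly the assertion.

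The argument is essentially bookkeeping, so there is no genuine obstacle; the only point requiring care is the interpretation of $\mu_i$ as a generalized function (a sum of point masses) rather than an ordinary density, so that $\int \lambda^k \, d\mu_i(\lambda)$ is read in the distributional, or Riemann--Stieltjes, sense against the atomic measure $\sum_j |e_i^T q_j|^2 \,\delta_{\lambda_j}$. With that convention fixed, the sifting step is immediate. One could alternatively sidestep delta functions by defining $\mu_i$ through its cumulative distribution function and invoking the spectral calculus $f(\tilde{A}) = \sum_j f(\lambda_j) q_j q_j^T$ with $f(\lambda)=\lambda^k$; this makes the $k$-independence of the eigenbasis explicit and, with $f$ an arbitrary polynomial, yields precisely the form needed later for the change of basis from monomials to Chebyshev moments.
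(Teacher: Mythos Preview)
Your proof is correct and is essentially the same argument as the paper's: both use the orthonormal eigendecomposition of $\tilde{A}$ to identify $\int_{-1}^1 \lambda^k\,d\mu_i(\lambda)=\sum_j \lambda_j^k|e_i^Tq_j|^2=[\tilde{A}^k]_{ii}$ and then invoke Theorem~1. The only cosmetic difference is that the paper writes the decomposition in matrix form $QDQ^T$ and chains the equalities in one line, whereas you use the outer-product form and add the (helpful but inessential) remarks about the distributional reading of $\mu_i$.
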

\begin{proof}
Assume $\tilde{A} = QDQ^T$, where $Q$ is an orthonormal matrix of eigenvectors and $D$ is a diagonal matrix of eigenvalues. Observe that
\begin{align*}
\begin{split}
\label{eqn:PDOS}
    \int_{-1}^1 \lambda^k \,d\mu_j(\lambda) &= \sum_{i=1}^n \lambda_i^k |e_j^Tq_i|^2 = e_j^TQD^kQ^Te_j\\
    &= e_j^T(QDQ^T)^ke_j = e_j^T \tilde{A}^k e_j = [\tilde{A}^k]_{jj}
\end{split}
\end{align*}
Hence the integral $\int_{-1}^1 \lambda^k \,d\mu_j(\lambda)$ is equal to the probability that a random walk of length $k$ on $G$ starting at node $i$ returns to the initial point.
\end{proof}

 From proposition \ref{prop:return}, we have that
\[\int_{-1}^1 T_k(\lambda)\,d\mu_j(\lambda) = T_k(\tilde{A})_{jj}\]
where $T_k(\lambda)$ is the $k$th Chebyshev polynomial.
We use the same stochastic diagonal estimation approach to accurately compute the matrix of moments $C \in \mathbb{R}^{K \times n}$, where $C_{ij}$ is the $i$th Chebyshev moment for the $j$th node, $T_{i}(\tilde{A})_{jj}$.

The return probabilities feature vector (RPF), or vector of polynomial moments, is then obtained from the $T_k(\tilde{A})_{ij}$ by solving the lower trangular linear system using forward-substitution. 
\label{eqn:system}
\begin{equation}
    \begin{bmatrix} T_{00} & 0 & \hdots & 0 \\ T_{10} & T_{11} & \hdots & 0 \\ \vdots & \hdots & \ddots & \vdots \\ T_{k0} & T_{k1} & \hdots & T_{kk} \end{bmatrix} \begin{bmatrix}\int_{-1}^1 \lambda^0 d\mu_i\\ \int_{-1}^1 \lambda^1 d\mu_i\\ \vdots \\ \int_{-1}^1 \lambda^kd\mu_i \end{bmatrix} = \begin{bmatrix} \int_{-1}^1 T_0(\lambda) d\mu_i \\ \int_{-1}^1 T_1(\lambda) d\mu_i\\\vdots \\ \int_{-1}^1 T_k(\lambda)d\mu_i \end{bmatrix} 
\end{equation}
Forward substitution is a numerically backwards stable algorithm. In practice, one commonly sets $k=50$ \cite{retp}.  

\subsubsection{Error Analysis} Two sources of error exist in computing the LDOS moments in the polynomial basis using equation \ref{eqn:DOSFormula}. The first is the error in computing the approximate Chebyshev moments $\{\int_{-1}^1 T_j(\lambda)\,d\mu_i\}_{j=0}^k$. The second is floating point error from solving the linear system in equation \ref{eqn:system}. It turns out that the error in the approximate Chebyshev moments is structured. There is a degree of internal consistency in the moment approximations because the individual moments are not approximated independently. Rather the approximate moments are exact moments for a distribution which approximates the density of states, namely
\begin{equation}
    \mu(\lambda, u) = \sum_{i=1}^N |u^Tq_i|^2\delta(\lambda-\lambda_i)
    \label{eqn:DOSFormula}
\end{equation}
where $u$ is usually taken to be a Gaussian random vector. That this quantity is an unbiased estimator of the DOS is implied by the following theorem $\cite{siamreview}$.

\begin{theorem}[Stochastic Sampling]
Let $A$ be a symmetric matrix in $\mathbb{R}^n$ with eigenvalue decomposition $A = \sum_{j=1}^n\lambda_j u_ju_j^T$ satisfying $\langle u_i, u_j \rangle = \delta_{ij}$ for $i, j = 1, ..., n$. If $v$ is a vector in $\mathbb{R}^n$, then it can be represented as $v = \sum_{j=1}^n \beta_j u_j$. If each component of $v$ is drawn from a normal distribution with zero mean and unit standard deviation, i.e.
\[\mathbb{E}[v] = 0, \mathbb{E}[vv^T] = I\]
then
\[\mathbb{E}[\beta_i\beta_j] = \delta_{ij}, \quad i, j = 1, 2, ..., n\]
\end{theorem}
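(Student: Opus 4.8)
The plan is to express each coefficient $\beta_i$ explicitly as a linear functional of the random vector $v$, and then propagate the two given moment conditions on $v$ through that linear relationship. First I would observe that since $\{u_j\}_{j=1}^n$ is an orthonormal basis, pairing the expansion $v = \sum_{j=1}^n \beta_j u_j$ with $u_i$ and using $\langle u_i, u_j\rangle = \delta_{ij}$ yields $\beta_i = \langle u_i, v\rangle = u_i^T v$. This is the only structural fact needed; the eigenvalues $\lambda_j$ and the matrix $A$ itself play no role in this particular statement, which is purely about the coordinate representation of $v$.

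Next I would compute the cross moment directly. Writing $\beta_i \beta_j = (u_i^T v)(v^T u_j) = u_i^T (v v^T) u_j$ and using that $u_i, u_j$ are deterministic vectors, linearity of expectation gives
\[
\mathbb{E}[\beta_i \beta_j] = u_i^T \, \mathbb{E}[v v^T] \, u_j.
\]
Invoking the hypothesis $\mathbb{E}[vv^T] = I$ reduces this to $u_i^T u_j = \delta_{ij}$ by orthonormality, which is exactly the claim. Equivalently, one can package the computation in matrix form: with $Q = [u_1 \mid \cdots \mid u_n]$ orthogonal we have $\beta = Q^T v$, so $\mathbb{E}[\beta\beta^T] = Q^T\,\mathbb{E}[vv^T]\,Q = Q^T Q = I$.

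The only point requiring any care — and it is minor — is the justification for interchanging expectation with the deterministic bilinear contraction against $u_i$ and $u_j$; in finite dimensions this is immediate from applying linearity of expectation entrywise, provided the second moments of $v$ exist, which they do for a standard Gaussian. I do not anticipate a genuine obstacle here: the theorem is essentially the statement that an orthogonal change of coordinates preserves the identity covariance of a white-noise vector, and the entire argument is a two-line consequence of $\beta_i = u_i^T v$ together with $\mathbb{E}[vv^T]=I$.
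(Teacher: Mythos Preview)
Your argument is correct: writing $\beta_i = u_i^T v$ and then computing $\mathbb{E}[\beta_i\beta_j] = u_i^T\,\mathbb{E}[vv^T]\,u_j = u_i^T u_j = \delta_{ij}$ is exactly the right (and essentially only) way to do this, and your remark that the matrix $A$ and its eigenvalues are irrelevant to the statement is on point. There is nothing to compare against, however: the paper does not supply its own proof of this theorem but simply states it and attributes it to the cited survey on approximating spectral densities.
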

The second source of error is related to the solve involving the matrix of Chebyshev matrix, which is poorly scaled. As such, we can use the Skeel condition number to more appropriately quantify the conditioning of the matrix than the $2$-norm condition number. The Skeel condition number has the property of invariance under row scaling, and is formally defined as
\[\text{Cond}(T) = \||T||T^{-1}|\|_\infty \]
where $|T|$ is the entry-wise absolute value operation. The Skeel condition number of $T_{50}$ is smaller than the $2$-norm condition number by a factor of $10^4$.
\subsubsection{Computation of LDOS Kernel}
We summarize these steps in the form of an algorithm for computing RPF via LDOS.
\label{algo:ldos}
\begin{algorithm}[H]
\caption{LDOS Return Probabilities Algorithm}
\begin{algorithmic}[1]\label{Algo:pdos}
\STATE \textbf{Inputs}: Normalized adjacency matrix $\tilde{A}$ for graph $G$ with eigenvalues in $[-1, 1]$, orthonormal basis $P$ for motif hashing \\ \STATE \textbf{Outputs:} Feature vector of return probabilities $v \in\mathbb{R}^{(k+1)n}$
\STATE Sample probe vectors $\{Z_j\}_{j=1}^{N_Z}$
\STATE Compute $\{Z_r\}_{r=1}^{N_Z} = \{PP^TZ_j\}_{j=1}^{N_Z}$, $P\in \mathbb{R}^{N\times r}$
\STATE $C \in\mathbb{R}^{(k+1)\times n} \leftarrow \frac{1}{N_Z}\sum_{r=1}^{N_Z} Z_r\circ AZ_r$
\STATE $C \leftarrow  J_MC$ (Jackson Damping)
\STATE Compute $V = T^{-1}C \in \mathbb{R}^{(k+1)\times n}$, where $T\in\mathbb{R}^{(k+1)\times (k+1)}$ contains Chebyshev coefficients $(3.9)$\nolinebreak[2]
\STATE Flatten $V$ to obtain $v\in \mathbb{R}^{n(k+1)}$
\STATE \textbf{Return $V$}
\end{algorithmic}
\end{algorithm}

As in $\cite{retp}$, given a set of $k$-step return probabilities feature vector (RPF) $\vec{\bm{p}}_i$ for each node of $G$, we assume they form an empirical distributions and embed them in a reproducing kernel Hilbert Space (RKHS) using kernel mean embedding. We use maximum mean discrepancy (MMD) \cite{MMD} to obtain similarity measures between empirical distributions.

Following \cite{retp}, we take the graph kernel to be
\begin{align*}
\begin{split}
    K_{LDOS}(G, H) &= \exp(-\gamma\|\mu(G) - \mu(H)\|_{\mathcal{H}}^p)
    \\&= \exp(-\gamma \text{MMD}^p(\mu_{G_1}, \mu_{G_2}))
\end{split}
\end{align*}
where 
\begin{equation*}
\begin{split}
\text{MMD}(\mu_G, \mu_H) &= \frac{1}{n_G^2} \textbf{1}_{n_G}^T K_{GG} \textbf{1}_{n_G} +\frac{1}{n_H^2}\textbf{1}_{n_H}^T K_{HH} \textbf{1}_{n_H} \\& - \frac{2}{n_Gn_H} \textbf{1}_{n_G}^TK_{GH}\textbf{1}_{n_H}
\end{split}
\end{equation*}
and $\mu_G, \mu_H$ are taken to be either vectors of LDOS moments or node-wise return probabilities for $G$ and $H$, e.g. the output of algrithm \ref{algo:ldos}.

\subsection{Unified DOS and LDOS Graph Kernel}
Motivated by the non-overlapping advantages of the DOS and LDOS kernels, we consider techniques from multiple-kernel learning to combine the two kernels. As used in \cite{cocabo}, a natural choice is the composite kernel given by the linear combination of sum and products
\begin{equation}
    K(G, H) = w_1K_{\text{DOS}} \cdot K_{\text{LDOS}} + \frac{w_2}{2}(K_{\text{DOS}}+ K_{\text{LDOS}})
    \label{eqn:combokernel}
\end{equation}
where $w_1+w_2 = 1$. In other words, we take a linear combination of sum and product kernels to create a more expressive kernel that is sensitive to both global and local information. 

From a geometric point of view, the DOS is able to detect global graph motifs: frequently occuring substructures in the graph, formed by nodes and edges in a specific way. The most common type of graph motif is the dangling vertex. Specific graph motifs give rise to certain eigenvalues, which in turn impact the shape of the spectral density, or DOS. While the DOS captures the presence of graph motifs, the LDOS effectively captures the local node structural roles: what is the function of a given node in the context of its local neighborhood, as reflected by random walks?

From a probabilistic point of view, the DOS encapsulates the probability that a random walk returns after $S$ steps, given that the starting point is chosen uniformly at random among the nodes of the graph $G$. The LDOS, on the other hand, captures the probability that a random walk returns after $S$ steps given a fixed starting node. 

This composite graph kernel is most effective when either one of DOS or LDOS is inadequate for summarizing a graph alone, as is often the case when a graph is very small or very large.  

The sum of positive definite matrices is positive definite; likewise, the element-wise product is also positive definite \cite{bhatia}. We conclude that the kernel defined in equation $\ref{eqn:combokernel}$ is positive definite. 

The technique of combining graph kernels does not introduce external, or new information to $\text{RetGK}_{\text{I}}$ when the number of moments used is the same. Instead it utilizes the same information in a more efficient manner. We are not combining two distinct kernels, as the underlying information used is the same. In fact, we recover the global DOS by summing over the LDOS moments.

\begin{algorithm}[H]
\caption{LDOS + DOS Graph Feature}
\begin{algorithmic}[1]\label{Algo:pdos_combo}
    \STATE \textbf{Inputs}: Normalized adjacency matrix $\tilde{A}$ for graph $G$ with eigenvalues in $[-1, 1]$ \\ \textbf{Outputs:} Feature vector of node-wise return probabilities $V \in\mathbb{R}^{(k+1)n}$  and global return probabilities $W \in \mathbb{R}^{k+1}$
    \STATE $C \in\mathbb{R}^{(k+1)\times n} \leftarrow \texttt{HutchinsonEstimator}(\tilde{A})$
    \STATE Compute $V = T^{-1}C \in \mathbb{R}^{(k+1)\times n}$, where $T\in\mathbb{R}^{(k+1)\times (k+1)}$ contains Chebyshev coefficients
    \STATE $W \leftarrow \sum_{i=1}^n v_i$
    \STATE \textbf{Return $V, W$}
\end{algorithmic}
\end{algorithm}
The time complexity of the algorithm is $O(|E|N_ZS)$, where $S=k+1$ is the total number of moments to be computed for graph.

\section{Experiments}
We experiment on a wide range of network-based graph datasets, with descriptions provided below. We denote the global density of states graph kernel simply by DOS, and the local density of states graph kernel simply by LDOS. We emphasize that LDOS is a scalable, efficient approximation to the return probability kernel $\text{RetGK}_{\text{I}} \cite{retp}$. In our experiments, we rerun $\text{RetGK}_{\text{I}}$ for all datasets, rather than citing values from literature. We combine DOS with both and LDOS and $\text{RetGK}_{\text{I}}$ to correct for the effects of tottering. For details of experimental setup and hyperparameter selection, see \S 4.2.

\begin{table*}
    \centering
    \small
    \caption{Classification accuracies (\%) and standard deviations. $\text{RetGK}_{\text{I}}$ values were re-computed, except for COLLAB. DOS, LDOS, LDOS+DOS, and $\text{RetGK}_{\text{I}}$+DOS values were computed. Remaining values were cited.} 
    \begin{tabular}{|l|cccccc|}\hline
    {}& \multicolumn{4}{c|}{Medium/Large Graphs} & \multicolumn{2}{c|}{Small Graphs} \\ \hline
{} &     REDDIT &     REDDIT-5K &  THREADS   & COLLAB & IMDB-MULTI & IMDB-BINARY\\
nodes   & 430  &   508.52 & 91.3 & 74.49 & 13 & 19.77 \\
edges   & 498  &  595   & 104.4 & 2458 &  65.94 & 96.53\\
graphs  & 2000 & 4999   & 642 & 5000 &  1500 & 1000\\
classes & 2    & 5      & 2 & 3 & 3 & 2 \\
\hline
FGSD \cite{fgsd}  & 86.50  & 47.76 & -- & 80.02 & \textbf{52.41}  & \textbf{73.62} \\
DGK \cite{retp} & 78.0(0.4) &  41.3(0.2) & -- & 73.1(0.3) & 44.6 (0.5) &  67.0(0.6)  \\
WL \cite{retp}  & 68.2(0.2) & 51.2(0.3) & -- & 74.8 (0.2)  & 49.8 (0.5) & 70.8 (0.5) \\
$\text{RetGK}_{\text{I}}$ & 86.95(0.36) & 55.67(0.19) & 77.78 (0.18) & 81.0 (0.3) & 47.12 (0.84) & 71.92 (0.75) \\
 \hline
DOS & 88.80(0.29) & 52.83(0.21) & 78.90 (0.56) & 80.78 (0.24) & 49.42 (0.45) & 72.77 (0.86) \\
LDOS & 91.19(0.26) & 55.67(0.17) & 71.03 (1.2)  & 75.29 (0.44) & 48.33 (0.74) & 71.02 (0.37)\\
\hline
LDOS+DOS & \textbf{92.48}(0.24) & \textbf{56.84}(0.20)  & 78.53 (0.43)  & \textbf{81.12} (0.18) & 49.00 (0.30) & 72.60 (0.57)   \\
$\text{RetGK}_{\text{I}}$ + $\text{DOS}$ & 91.2 (0.28) & {56.75} (0.18) & \textbf{80.11} (0.45) & -- & 50.96 (0.35) &72.25 (0.49) \\ 
 \hline
    \end{tabular}
\label{table:baselines}
\end{table*}
\subsection{Experimental Datasets}
\begin{itemize}
    \item \textbf{REDDIT} \cite{KKMMN2016} is a social network dataset. Each graph represents a discussion thread on Reddit, with nodes corresponding to users and edges between users if one has replied to the comment of another. The graphs are classified into two categories based on thread type:\vspace{-\baselineskip}\vspace{1mm}
    \begin{itemize}
        \item Threads from Q\&A subreddits (ie. r/AskReddit)
        \item Threads from online discussion type subreddits (ie. r/atheism).
    \end{itemize}
    \item \textbf{REDDIT-5K} \cite{KKMMN2016} is similar to REDDIT, except with 5 subreddits labeled independently: worldnews, videos, AdviceAnimals, aww, and mildlyinteresting.
    \item  \textbf{COLLAB} \cite{KKMMN2016} is a physics research collaboration network graph dataset. Each graph corresponds to the research network for a given researcher, nodes correspond to researchers, and edges between researchers indicate they've collaborated with. Each graph is classified based on the physics subfield the researcher belongs to.
    \item \textbf{IMDB-BINARY} \cite{KKMMN2016} Is a set of ego-networks of actors. In each graph, nodes correspond to actors/actresses, and edges between actors indicate they starred in the same movie. The graphs are classified into two categories based on the genre of movie: Action and Romance.
    \item \textbf{IMDB-MULTI} \cite{KKMMN2016} Is the same as IMDB-BINARY, except using three genres: Comedy, Romance and Sci-Fi.
    \item \textbf{REDDIT-THREADS} \cite{karate} 
    Discussion and non-discussion based threads from Reddit. We filtered the dataset to only include graphs with $>87$ nodes and balanced the number of instances in each class.
\end{itemize}

\begin{figure}[th]
\centering
    \includegraphics[width=.53\textwidth]{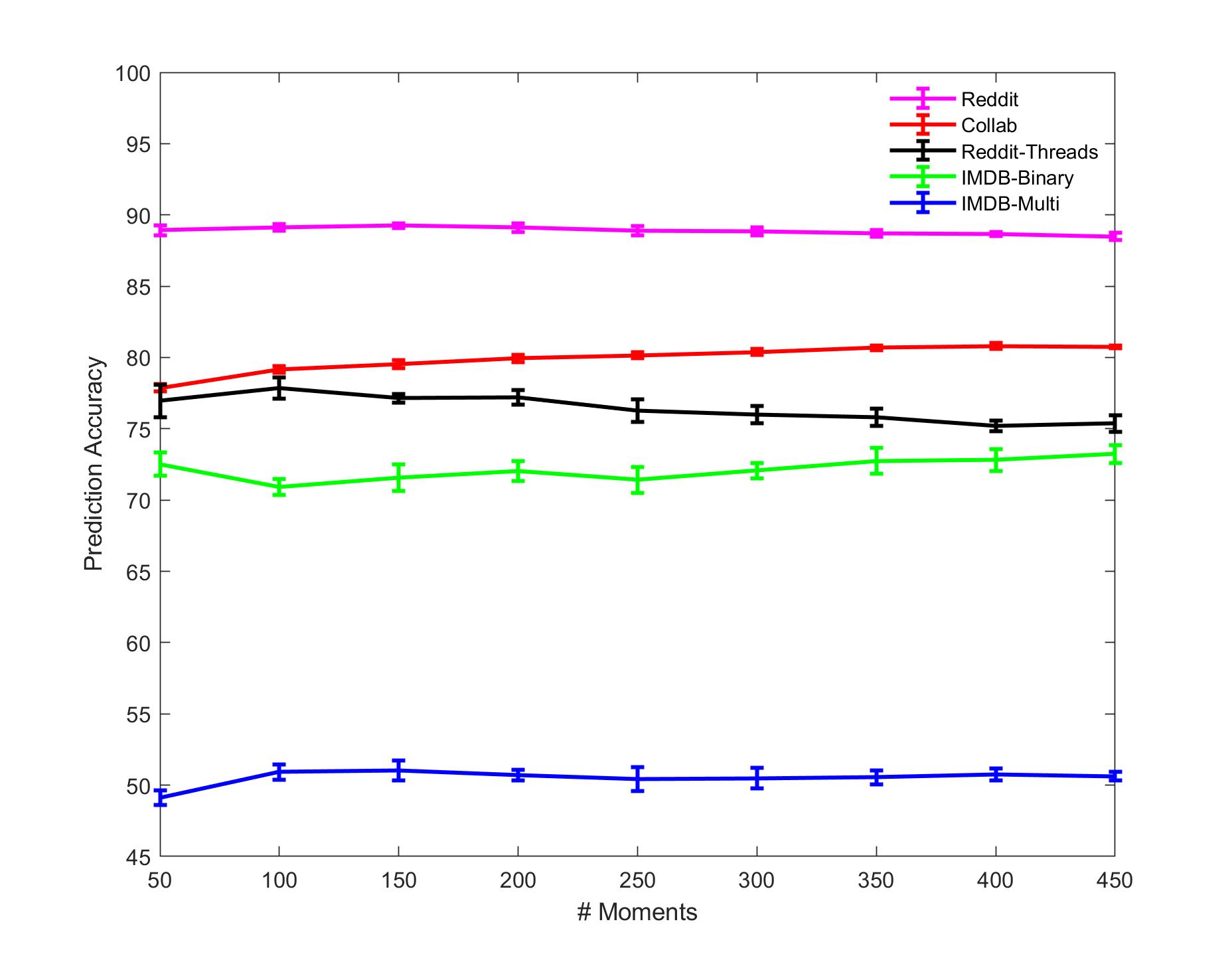}
    \caption{DOS prediction accuracy and standard deviation versus number of Chebyshev moments.}
\label{fig:Benchmarks1}
\end{figure}

\begin{figure}[th]
\centering
    \includegraphics[width=.53\textwidth]{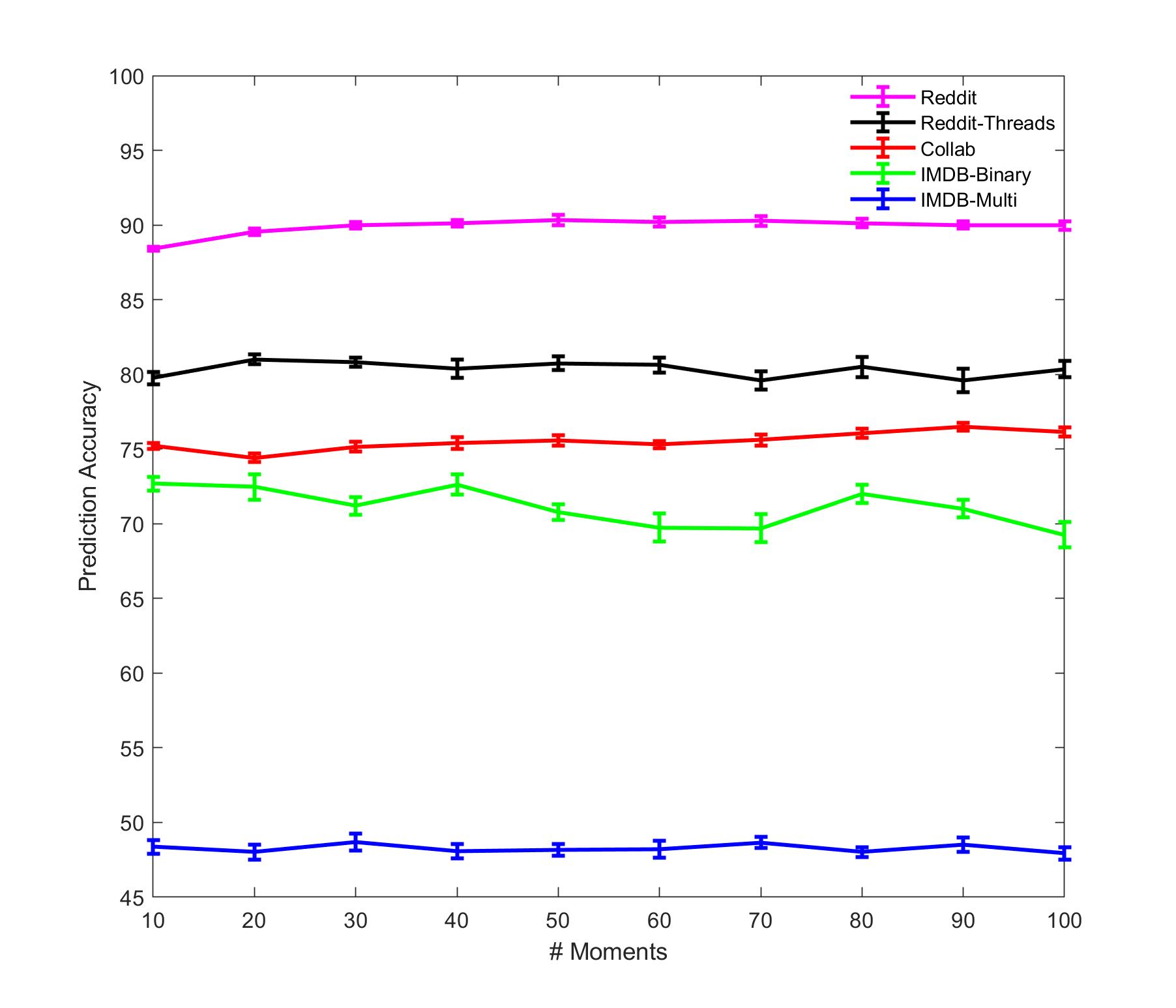}
    \caption{LDOS prediction accuracy and standard deviation versus number of Chebyshev moments.}
\label{fig:Benchmarks2}
\end{figure}

\subsection{Hyperparameter Selection}
Following $\cite{graphkernelsurvey}$, we select the SVM parameter $C$ from $\{10^{-3}, 10^{-2}, ..., 10^3\}$ using cross-validation on the training set. We optimize $C$ using the training set only. We perform the outer $10$-fold cross-validation loop $10$ times with random splits and report mean accuracy as well as standard deviation. For generating the DOS graph kernel (3.1), we used $50-400$ Chebyshev moments and $2000$ random probe vectors. For generating the LDOS graph kernel, we used $50$ Chebyshev moments and $2000$ probe vectors. We set the number of moments to be $50$ because that is the number of steps taken in the random walk in $\text{RetGK}_{\text{I}}$. As in \cite{retp}, we select the $p$-value in the from $\{1, 2\}$. In all cases we choose $\gamma$ to be the median of entries in the kernel matrix. In the composite kernel, we always take the weights to be $w_1 = w_2 = \frac{1}{2}$ for simplicity. We found that prediction accuracy was insensitive to these weights. In motif filtering, we chose to filter out eigenvalues $\{-1/4, -1/3, -1/2, 0\}$, as done in \cite{NDS}.

We used the $\text{C-SVM}$ implementation of $\text{LIBSVM}$ (Chang, et al) to conduct all experiments. We used code from $\cite{NDS}$ to compute DOS and LDOS feature vectors. Return probability kernels were computed using NVIDIA Titan V GPU with 5120 1.455GHz cores, and DOS kernels were computed using an Intel Xeon with 12 cores. 

\subsection{Discussion}
Even though $\text{LDOS}$ is an $O(|E|)$ approximation to $\text{RetGK}_{\text{I}}$, which is scales like $O(N^3)$, we observed competitive performance between the two, indicating that the accuracy of our approximation was sufficient for classification tasks. Moreover, when we combined the LDOS kernel with the DOS kernel, we observed that it uniformly boosted the classification accuracy of LDOS across medium/large graphs with greater than $50$ nodes on average. When $\text{RetGK}_{\text{I}}$ was combined with DOS, we saw a similar increase in accuracy. We attribute this to the incorporation of global return probability information. 

The new DOS kernel has reasonable performance across a range of datasets, however, it had the highest performance on datasets with small graphs. On small graphs with $<20$ nodes on average, the RPF feature of length $50$ was less capable of telling apart graphs than DOS graph kernel, likely because the graphs had limited size. For small graphs, the DOS+LDOS combination had less visible effect, possibly because the distinction between local and global information is blurred. We recommend using FGSD in the regime of small graphs, and recommend LDOS+DOS for medium and large graphs.


Through our experiments, we illustrated the benefits of combining the DOS and LDOS kernels using techniques from multiple kernel learning. In all large graph datasets, the resulting combination yielded higher accuracy than any single kernel alone. These results suggest that our approach of reusing spectral information (node-wise Chebyshev moments $d_{mk}$) in two different ways (yielding the DOS and LDOS graph features, respectively), has the potential to enhance classifier performance across large graph datasets --- and at minimal extra cost over computing the LDOS moments, because the DOS moments are obtained by summing over the LDOS moments.


\begin{figure}[th]
\centering
    \includegraphics[width=.5\textwidth]{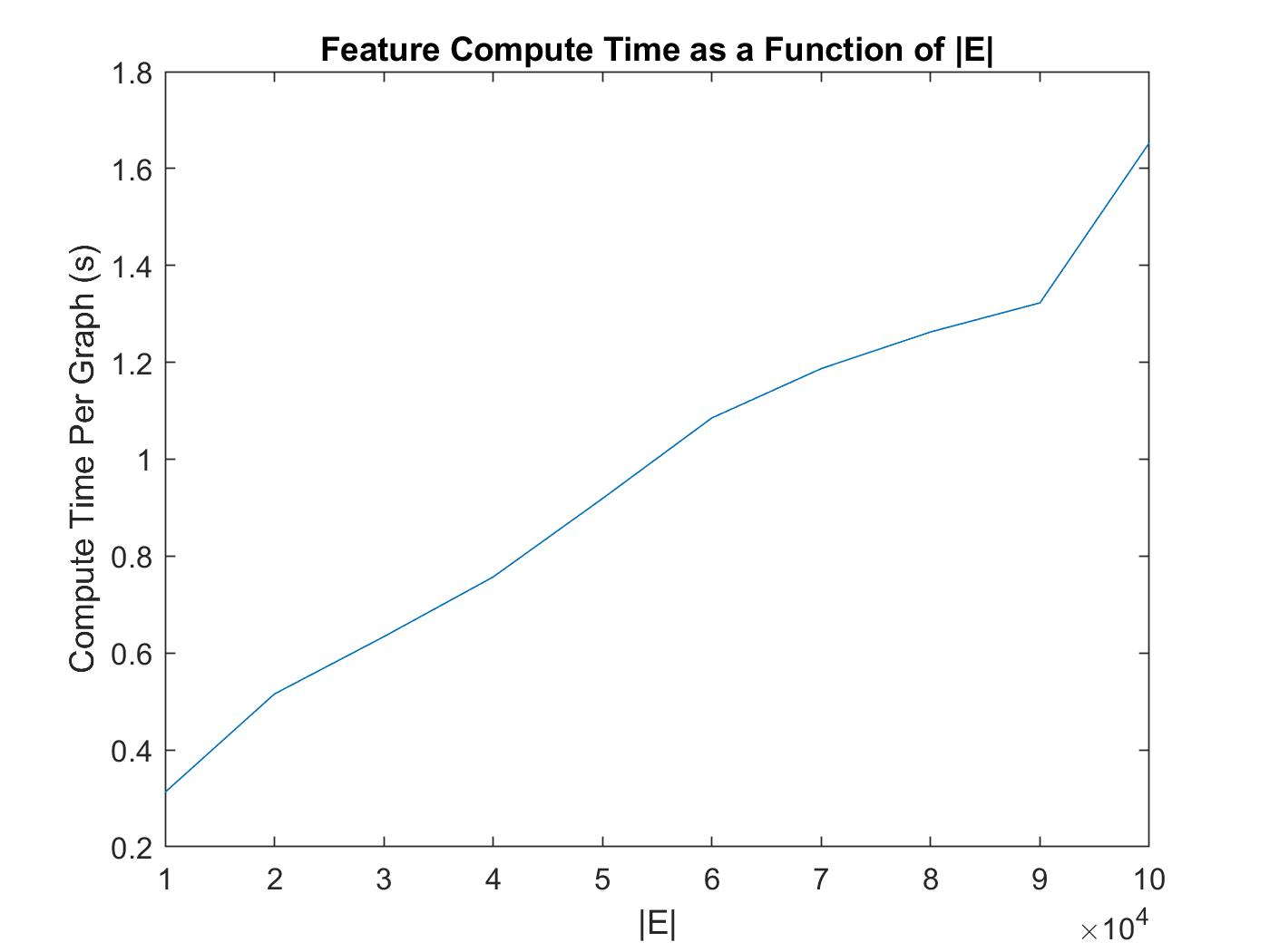}
    \caption{The LDOS Feature generation run time scales linearly with the number of edges $|E|$ in $G$.}
\label{fig:LDOS_Scaling}
\end{figure}

\section{Conclusion}

In this paper, we introduced a unified local and global DOS graph kernel to amend the problem of tottering in random walks, and to enable return probability-based graph kernels to scale to large graphs with tens of thousands to millions of nodes. Tottering is a problem which reduces the expressive power of random walks by placing overemphasis on local structure and discounting global, or faraway structures. By explicitly taking global structure into account, we were able to offset the effect of tottering and improve the performance of return probability graph kernels across a range of large graph datasets where a random walk of length $50$ is incapable of weighing local and global information adequately. We outperform state-of-the-art graph kernels on all of the large unlabeled graph datasets considered. 

Our methods are inherently scalable to graphs with millions of nodes and hundreds of millions of edges \cite{NDS}, unlike $\text{RetGK}_{\text{I}}$ and $\text{SAGE}$. By relying on matrix-vector multiplications (MVMs), our algorithm scales like $O(|E|)$ and naturally exploits sparsity in the graph adjacency matrix. At the same time, we showed empirically that our method remains competitive with state-of-the-art graph kernels for small, medium, and large sized graphs. The DOS/LDOS graph kernel is readily extended to graphs with both node and edge attributes, since the algorithm for return probabilities generates the same features as $\text{SAGE}$ \cite{sagegk}, which successfully handles both node and edge features by taking the adjoint.   

In the future we would like to investigate whether our DOS and LDOS-based graph feature vectors can used in the pre-training phase of GNNs, similar to how $\cite{fewshot}$ exploits the Wasserstein barycenter to create super-graphs. We would like to investiate replacing the entry-wise geometric mean in the composite kernel by the matrix geometric mean $A \# B$. Lastly, we would like to examine how much the global DOS can be used to enhance classification accuracy on node-attributed and edge-attributed datasets.

\section{Acknowledgements}
We thank Yuanzhe Xi and Anil Damle for their helpful comments and suggestions. This work was supported in part by the Hummer-Tuttle gift to Professor Al Barr through the Caltech Division of Engineering and Applied Science.
\newpage


\begin{thebibliography}{99}

\bibitem{netlsd}
Anton Tsitsulin et al. {\em Netlsd: hearing the shape of a graph}. Proceedings of the 24th ACM SIGKDD International Conference on Knowledge Discovery \& Data Mining. 2018.

\bibitem{slaq}
Anton Tsitsulin, Marina Munkhoeva, and Bryan Perozzi. {\em Just slaq when you approximate: Accurate spectral distances for web-scale graphs}. Proceedings of The Web Conference 2020. 

\bibitem{nbcqanon}
Ari Sen and Brandy Zadrozny. {\em QAnon groups have millions of members on Facebook, documents show}. NBC News. 10 Aug, 2020. Web. 12 Oct, 2020.

\bibitem{MMD}
Arthur Gretton et al. {\em A kernel two-sample test}. Journal of Machine Learning Research, 13(Mar):723–773, 2012.

\bibitem{karate}
Benedek Rozemberczki, Oliver Kiss, and Rik Sarkar. {\em An API Oriented Open-source Python Framework for Unsupervised Learning on Graphs}. International Conference on Information and Knowledge Management, 2020.

\bibitem{cocabo} 
Binxin Ru, Ahsan S. Alvi, Vu Nguyen, Michael A. Osborne, and Stephen J Roberts.
{\em Bayesian Optimisation over Multiple Continuous and Categorical Inputs}. 3rd Workshop on Meta-Learning at NeurIPS 2019, Vancouver, Canada.

\bibitem{tudata} Christopher Morris, Nils M. Kriege, Franka Bause, Kristian Kersting, Petra Mutzel and Marion Neumann. {\em TUDataset: A collection of benchmark datasets for learning with graphs}

\bibitem{Golub}
Gene Golub and Charles Van Loan. {\em Matrix Computations}. Johns Hopkins University Press, Baltimore MD, 4 edition, 2012.

\bibitem{gromov}
Hongteng Xu, Dixin Luo, Hongyuan Zha, and Lawrence Carin
{\em Gromov-Wasserstein Learning for Graph Matching and Node Embedding.} Proceedings of the 36th International Conference on Machine Learning, PMLR 97:6932-6941, 2019.


\bibitem{fewshot}
Jatin Chauhan, Deepak Nathani, and Manohar Kaul. {\em Few-Shot Learning on Graphs via Super-Classes based on Graph Spectral Measures}. ICLR, 2020.

\bibitem{graphkernelsurvey}
Johansson Kriege and Morris. {\em A survey on graph kernels}. Applied Network Science, 5, 2020.

\bibitem{gin}
Keyulu Xu, Weihua Hu, Jure Leskovec, Stefanie Jegelka {\em How Powerful are Graph Isomorphism Networks?} ICLR, 2019.

\bibitem{graphkernels}
Kondor Vishwanathan et al. {\em Graph kernels}. Journal of Machine Learning Research, 2010.


\bibitem{KKMMN2016} Kristian Kersting et al. {\em Benchmark Data Sets for Graph Kernels}.


\bibitem{NDS}
Kun Dong, Austin R. Benson, and David Bindel. {\em Network density of states}. Proceedings of the 25th ACM SIGKDD International Conference on Knowledge Discovery \& Data Mining, July 2019, Pages 1152–1161


\bibitem{sagegk}
Lingfei Wu, Zhen Zhang, Arye Nehorai, Liang Zhao, and Fangli Xu. {\em SAGE: Scalable Attributed Graph Embeddings For Graph Classification}. ICLR, 2019.


\bibitem{totter}
Mahe, P., Ueda, N., Akutsu, T., Perret, J.-L., and Vert, J.-P. {\em Extensions of marginalized graph kernels}. In Proceedings of the 21st International Conference on Machine Learning (ICML), 2004.

\bibitem{PropKer}
Marion Neumann et al. {\em Propagation kernels: efficient graph kernels from propagated information}. Mach Learn 102, 209–245 (2016).



\bibitem{wassleh}
Matteo Togninalli et al.  {\em Wasserstein Weisfeiler-Lehman Graph Kernels}. In Proceedings of the 33rd Conference on Neural Information Processing Systems (NeurIPS 2019)


\bibitem{aaai} Nikolentzos, G., Meladianos, P., and Vazirgianni, M.
{\em Matching Node Embeddings for Graph Similarity.} In Proceedings of the 31st AAAI Conference on Artificial Intelligence, 2017.

\bibitem{wloa}
Nils M. Kriege and Pierre-Louis Giscard and Richard C. Wilson. {\em On Valid Optimal Assignment Kernels and Applications to Graph Classification}. 30th Conference on Neural Information Processing Systems (NIPS 2016).

\bibitem{wl}
Nino Shervashidze and Pascal Schweitzer and Erik Jan van Leeuwen and Kurt Mehlhorn and Karsten M. Borgwardt. {\em Weisfeiler-Lehman Graph Kernels}. Journal of Machine Learning Research, 2011.


\bibitem{dgk}
Pinar Yanardag and S.V.N. Vishwanathan {\em Deep Graph Kernels}, In Proceedings of the 21th ACM SIGKDD International Conference on Knowledge Discovery and Data Mining (KDD 2015)



\bibitem{bhatia}
Rajendra Bhatia, {\em Matrix Analysis}. Springer, 1997.


\bibitem{fgsd}
Saurabh Verma and Zhi-Li Zhang. {\em Hunt For The Unique, Stable, Sparse And Fast Feature Learning On Graphs }. NeurIPS, 2017.


\bibitem{realgraphs}
Siddhartha Sahu, Amine Mhedhbi, Semih Salihoglu, Jimmy Lin, M. Tamer Özsu. {\em The Ubiquity of Large Graphs and Surprising Challenges
of Graph Processing}. Proceedings of the VLDB Endowment, 2017.

\bibitem{gntk}
Simon S. Du et al.
{\em Graph Neural Tangent Kernel: Fusing Graph Neural Networks with Graph Kernels}. NeurIPS, 2019.


\bibitem{fastrandom}
U Kang and Hanghang Tong and Jimeng Sun. {\em Fast Random Walk Graph Kernel}. SIAM International Conference on Data Mining, 2012.


\bibitem{ogb} 
Weihua Hu and Matthias Fey and Marinka Zitnik and Yuxiao Dong and Hongyu Ren and Bowen Liu and Michele Catasta and Jure Leskovec. {\em Open Graph Benchmark: Datasets for Machine Learning on Graphs}. CoRR abs/2005.00687


\bibitem{GNNsurvey}
Wu Z, Pan S, Chen F, et al. {\em A Comprehensive Survey on Graph Neural Networks}. IEEE Transactions on Neural Networks and Learning Systems. 2020 Mar. DOI: 10.1109/tnnls.2020.2978386.

\bibitem{collabsize}
Jure Leskovec and Andrej Krevl. {\em SNAP Datasets: Stanford Large Network Dataset Collection}. \url{http://snap.stanford.edu/data}. 2014 Jun.

\bibitem{siamreview}
Yousef Saad Lin Lin and Chao Yang. {\em Approximating spectral densities of large matrices}. SIAM Rev, 2016.

\bibitem{retp}
Zhen Zhang, Mianzhi Wang, Yijian Xiang, Yan Huang, and Arye Nehorai. {\em RetGK: Graph Kernels based on Return Probabilities of Random Walks}. NeurIPS, 2018.



\end{thebibliography}
\end{document}